\begin{document}
\title{Hierarchical Clustering with Prior Knowledge}

\author{Xiaofei Ma}
\orcid{}
\affiliation{%
  \institution{Amazon.com Inc.}
  \streetaddress{300 Boren Ave N.}
  \city{Seattle}
  \state{Washington}
  \postcode{98109}
}
\email{xiaofeim@amazon.com}

\author{Satya Dhavala}
\affiliation{%
  \institution{Amazon.com Inc.}
  \streetaddress{300 Boren Ave N.}
  \city{Seattle}
  \state{Washington}
  \postcode{98109}
}
\email{sdhavala@amazon.com}


\begin{abstract}
Hierarchical clustering is a class of algorithms that seeks to build a hierarchy of clusters.
It has been the dominant approach to constructing embedded classification schemes since it outputs dendrograms, 
which capture the hierarchical relationship among members at all levels of granularity, simultaneously.  
Being greedy in the algorithmic sense, 
a hierarchical clustering partitions data at every step solely based on a similarity / dissimilarity measure.
The clustering results oftentimes depend on not only the distribution of the underlying data, 
but also the choice of dissimilarity measure and the clustering algorithm.
In this paper, we propose a method to incorporate prior domain knowledge about entity relationship into the hierarchical clustering.
Specifically, we use a distance function in ultrametric space to encode the external ontological information.
We show that popular linkage-based algorithms can faithfully recover the encoded structure.
Similar to some regularized machine learning techniques,
we add this distance as a penalty term to the original pairwise distance to regulate the final structure of the dendrogram.
As a case study, we applied this method on real data in the building of a customer behavior based product taxonomy for an Amazon service,
leveraging the information from a larger Amazon-wide browse structure.
The method is useful when one want to leverage the relational information from external sources,
or the data used to generate the distance matrix is noisy and sparse.
Our work falls in the category of semi-supervised or constrained clustering.
\end{abstract}

%
%

\begin{CCSXML}
<ccs2012>
<concept>
<concept_id>10010147.10010257.10010258.10010260.10003697</concept_id>
<concept_desc>Computing methodologies~Cluster analysis</concept_desc>
<concept_significance>500</concept_significance>
</concept>
<concept>
<concept_id>10010147.10010257.10010282.10011305</concept_id>
<concept_desc>Computing methodologies~Semi-supervised learning settings</concept_desc>
<concept_significance>500</concept_significance>
</concept>
<concept>
<concept_id>10010147.10010257.10010321.10010337</concept_id>
<concept_desc>Computing methodologies~Regularization</concept_desc>
<concept_significance>500</concept_significance>
</concept>
<concept>
<concept_id>10010147.10010257.10010321</concept_id>
<concept_desc>Computing methodologies~Machine learning algorithms</concept_desc>
<concept_significance>300</concept_significance>
</concept>
</ccs2012>
\end{CCSXML}

\ccsdesc[500]{Computing methodologies~Cluster analysis}
\ccsdesc[500]{Computing methodologies~Regularization}
\ccsdesc[300]{Computing methodologies~Machine learning algorithms}
\ccsdesc[300]{Computing methodologies~Semi-supervised learning settings}

\keywords{hierarchical clustering, semi-supervised clustering, ultrametric distance, regularization}

\maketitle

\section{Introduction}
\label{sec: intro}

Hierarchical clustering is a a prominent class of clustering algorithms.
It has been the dominant approach to constructing embedded classification schemes \cite{Murtagh2012}. 
Compared with partition-based methods (flat clustering) such as K-means,
a hierarchical clustering offers several advantages.
First, there is no need to pre-specify the number of clusters. 
Hierarchical clustering outputs dendrogram (tree), 
which the user can then traverse to obtain the desired clustering.
Second, the dendrogram structure provides a convenient way of exploring entity relationships at all levels of granularity.  
Because of that, for some applications such as taxonomy building,
the dendrogram itself, not any clustering found in it, is the desired outcome.
For example, hierarchical clustering has been widely employed and explored within the context of phylogenetics,
which aims to discover the relationships among individual species, 
and reconstruct the tree of biological  evolution.
Furthermore, when dataset exhibits multi-scale structure,
hierarchical clustering is able to generate a hierarchical partition of the data at different levels of granularity,
while any standard partition-based algorithm will fail to capture the nested data structure.
 
In a typical hierarchical clustering problem, 
the input is a set of data points and a notion of dissimilarity between the points,
which can also be represented as a weighted graph whose vertices are data points,
and edge weights represent pairwise dissimilarities between the points.
The output of the clustering is a dendrogram, 
a rooted tree where each leaf node represents a data point,
and each internal node represents a cluster containing its descendant leaves.
As the internal nodes get deeper in the tree, 
the points within the clusters become more similar to each other, 
and the clusters become more refined.
Algorithms for hierarchical clustering generally fall into two types:
Agglomerative (\textquotedblleft bottom up\textquotedblright) approach: 
each observation starts in its own cluster, at every step a pair of most similar clusters are merged.
Divisive (\textquotedblleft top down\textquotedblright) approach: 
all observations start in one cluster, and splits are performed recursively, 
dividing a cluster into two clusters that will be further divided.

As a popular data analysis method, hierarchical clustering has been studied and used for decades.
Despite its widespread use, it has rather been studied at a more procedural level in terms of practical algorithms.
There are many hierarchical algorithms.
Oftentimes, different algorithms produce dramatically different results on the same dataset.
Compared with partition-based methods such as K-means and K-medians,
hierarchical clustering has a relatively underdeveloped theoretical foundation.
Very recently, Dasgupta \cite{Dasgupta2015} introduced an objective function for hierarchical clustering,
and justified it for several simple and canonical situations.
A theoretical guarantee for this objective was further established \cite{Moseley2017} on some of the widely used hierarchical clustering algorithms.
Their works give insight into what those popular algorithms are optimizing for.
Another route of theoretical research is to study the clustering schemes under an axiomatic view \cite{Meila2005, Carlsson2013, Eldridge2015, Zadeh2009, Ackerman2010, Ben-David2009},
charactering different algorithms by the significant properties they satisfy.  
One of the influential works is Kleinberg's impossibility theorem \cite{Kleinberg2002},
where he proposed three axioms for partitional clustering algorithms, 
namely scale-invariance, richness and consistency.
He proved that no clustering function can simultaneously satisfy all three.
It is showed \cite{Carlsson2010}, however, if a nested family of partitions instead of fixed single partition is allowed, 
which is the case for hierarchical clustering,
single linkage hierarchical clustering is the unique algorithm satisfying the properties.
The stability and convergence theorems for single link algorithm are further established.
Ackerman \cite{Ackerman2016} proposed two more desirable properties, namely, locality and outer consistency,
and showed that all linkage-based hierarchical algorithms satisfy the properties.
Those property-based analyses provide a better understanding of the techniques,
and guide users in choosing algorithms for their crucial tasks.

Based on similarity information alone, 
clustering is inherently an ill-posed problem where the goal is to partition the data into some unknown number of clusters
so that within cluster similarity is maximized while between cluster similarity is minimized \cite{Jain2010}.
It's very hard for a clustering algorithm to recover the data partitions that satisfies various criteria of a concrete task.
Therefore, any external or side information from other sources can be extremely useful in guiding clustering solutions.
Clustering algorithms that leverage external information fall into the category of semi-supervised or constrained clustering \cite{Basu2008}.
There are many ways to incorporate external information \cite{Bair2013, Wagstaff2001, Liu2007, Xing1986}.
Starting from instance-level constraints such as must-link constraints and cannot-link constraints,  
many approaches try to modify the objective function of the algorithms to incorporate pairwise constraints.
Beyond pairwise constraints, external knowledge has been used as the seeds for clustering,
cluster size constraints, or as prior probabilities of cluster assignment.
However, the majority of existing semi-supervised clustering methods are based on partition-based clustering. 
Comparatively few methods on hierarchical clustering have been proposed.
In fact, human is very good at summarizing and extracting high level relational information between entities.
Human built taxonomies, such as WordNet, Wikipedia, 20 newsgroup dataset etc. 
are high quality sources of ontological information that a hierarchical clustering algorithm can leverage.
Several factors contributed to the underdevelopment in the semi-supervised hierarchical clustering algorithms.
One is the lack of global objective functions.
Only very recently an objective function for hierarchical clustering was proposed \cite{Dasgupta2015}.
Another reason is that simple must-link and cannot-link constraints used in flat clustering are not suitable in
hierarchical clustering since entities are linked at different level of granularity.
Furthermore, the output of hierarchical clustering is a dendrogram which is harder to represent than the result from a flat clustering.

In this paper, we focus on agglomerative hierarchical clustering algorithms since divisive algorithms can be considered as 
a repeated partitional clustering (bisectioning).
We describe a method of incorporating prior ontological knowledge into agglomerative hierarchical clustering
by using a distance function in ultrametric space representing the complete or partial tree structure.
The constructed ultrametic distance is combined with the original task-specific distance to form a new distance measure between the data points.
The weight between the two distance components, which reflects the confidence of prior knowledge,
is a hyper-parameter that can be tuned in a cross-validation manner, 
by optimizing an external task-specific metric.
We then use a property-based approach to select algorithms to solve the semi-supervised clustering problem.
We note that there are several pioneer works on constrained hierarchical clustering \cite{Huang2016, Heller2005, Bade2007, Jose2000}.
Davidson \cite{Davidson2005} explored the feasibility problem of incorporating 4 different instance and cluster level constraints into hierarchical clustering.
Zhao \cite{Zhao2010} studied hierarchical clustering with order constraints in order to capture the ontological information.
Zheng \cite{Zheng2011} represented triple-wise relative constraints in a matrix form, 
and obtained the ultrametric representation by solving a constrained optimization problem.
Compared with previous studies, our goal is to recover the hierarchical structure of the data 
which resembles existing ontology and yet provides new insight into entity relationships based on a task-specific distance measure.
The external ontological knowledge serves as soft constraints in our approach,
which is different from the hard constraints used in the previous works.
Our constructed distance measure also fits naturally with the global object function \cite{Dasgupta2015} recently proposed for hierarchical clustering.

The paper is organized as follows: 
In Section \ref{sec: setting}, we state the problem and introduce the concepts used in this paper.
In Section \ref{sec: method}, we discuss our approach to solving the semi-supervised hierarchical clustering problem.
In Section \ref{sec: experiment}, we present a case study of applying the proposed method on real data
to the building of a customer behavior based product taxonomy for an Amazon service. 
Finally, we summarize the results in Section \ref{sec: conclusion}.

\section{Problem Setting}
\label{sec: setting}
In this section we define the context and the problem we want to solve, 
i.e. the semi-supervised hierarchical clustering problem.

Given a set of data points $X = \{ x_{1}, x_{2}, ..., x_{n} \} $,
a pairwise dissimilarity measure $D = \{ d(x_{i}, x_{j})  |  x_{i}, x_{j}   \in X \}$,
a task-specific performance measure $\mu$,
and a complete or partial tree structure $T$ contains external ontological information, 
whose leaf nodes are instances of $X$, 
and whose internal nodes are clusters containing descendant leaves.
The goal of the semi-supervised hierarchical clustering problem is to output a dendrogram over $X$
represented as a pair $(X, \theta)$, 
where $X$ is the set of data points, 
$\theta : [ \, 0,  \infty) \to \wp(X)$, 
$\wp(X)$ is a partition of $X$,
such that the dendrogram $\theta$ resembles $T$ and performs best in terms of $\mu$.

Two important concepts related to the above problem setting are the notion of dissimilarity and the dendrogram. 

\begin{definition}
A dissimilarity measure $D$ is usually represented as a pair $(X, d)$,
where X is a set, and $d : X \times X \to \mathbf{R}^{+} $ such that for any $x_{i}, x_{j} \in X$:
\begin{align*}
	&\text{$1.$ $d(x_{i}, x_{j}) \geq 0$, non-negativity}\\
	&\text{$2.$ $d(x_{i}, x_{j}) = 0$ if and only if $i = j$, identity} \\
	&\text{$3.$ $d(x_{i}, x_{j}) = d(x_{j}, x_{i})$, symmetry}
\end{align*}
\end{definition}
As an example, cosine dissimilarity is a commonly used dissimilarity measure in high-dimensional positive space.  

\begin{definition}
If the dissimilarity also satisfies the following triangle inequality,
for any $x_{i}, x_{j}, x_{k} \in X$:
\begin{equation}
	\begin{aligned}	
	d(x_{i}, x_{k}) \leq d(x_{i}, x_{j}) + d(x_{j}, x_{k})
	\end{aligned}
	\label{eq: triangle}
\end{equation}
then we have a distance measurement in the metric space.
\end{definition}
Euclidean distance and manhattan distance are popular metric space distances.

\begin{definition}
A dendrogram $\theta$ is a tree that satisfies the following conditions \cite{Carlsson2010}:
\begin{align*}
	&\text{$1.$ $\theta(0) = \{ \{x_{1} \}, ..., \{x_{n} \} \}$} \\
	&\text{$2.$ There exists $t_{0}$ such that $\theta (t)$ contains only one cluster for $t  \geq t_{0}$.} \\
	&\text{$3.$ If $r \leq s$, then $\theta (r)$ refines $\theta (s).$} \\
	&\text{$4.$ For all $r$, there exists $\epsilon > 0$ such that $\theta(r) = \theta(t) $ for $t \in [\,r, r + \epsilon]\,$.}
\end{align*}
\end{definition}
Condition 1 ensures that the initial partition is the finest possible, each data point forms a cluster.
Condition 2 tells that for large enough $t$, the partition becomes trivial.
The whole space is one cluster.
Condition 3 ensures that the structure of dendrogram is nested.
Condition 4 requires that the partition is stable under small perturbation of size $\epsilon$.
The parameter of dendrogram $\theta$ is a measure of scale, 
and reflected in the height of different levels.
The notion of resemblance between dendrograms will be discussed more in Section \ref{subsec:property}.

\section{Proposed Method}
\label{sec: method}
In order to incorporate prior knowledge into hierarchical clustering,
we need a way to faithfully represent prior relational information between entities.
Since relational knowledge such as hyponymy and synonymy relations in WordNet, 
class taxonomy in the 20-newsgroups dataset can usually be represented as a tree,
this suggests that it is convenient to define a distance function that leverages the tree structure.
In fact, Resnik's approach \cite{Resnik1995} to semantic similarity between words was the first attempt to brings together 
the ontological information in WordNet with the corpus information.
Figure \ref{fig:wordnet} shows a fragment of structured lexicons defined in WordNet.

\begin{figure}[h]
	\centering
		\includegraphics[width=0.45\textwidth]{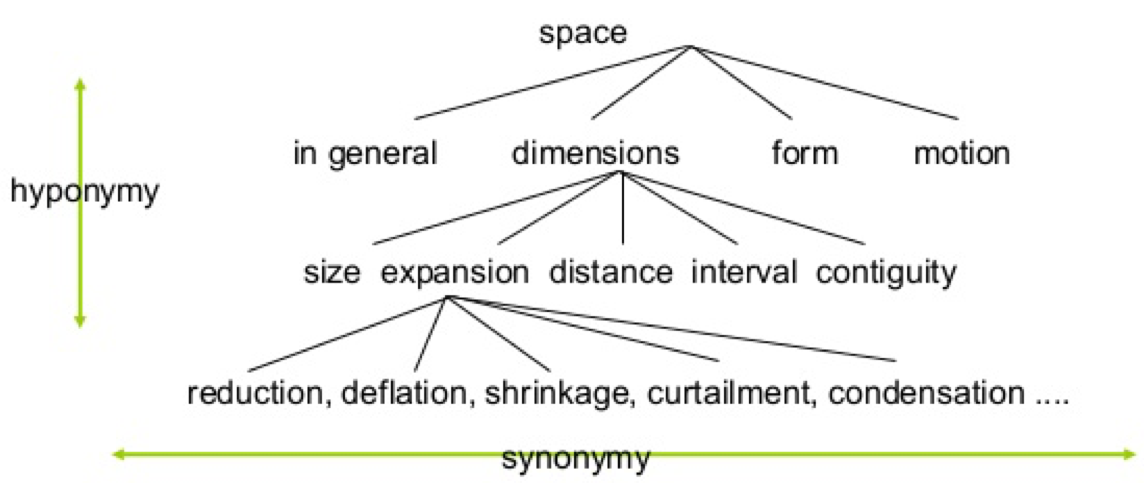}
		\caption{\texttt{Structured lexicons from WordNet}}
		\label{fig:wordnet}
\end{figure}

To encode a tree structure, we first introduce the concept of ultrametric space.
\begin{definition}
A metric space is an ultrametric $(X, u)$ if and only if, 
\begin{equation}
	\begin{aligned}
		d(x_{i}, x_{k}) \leq max(d(x_{i}, x_{j}), d(x_{j}, x_{k}))
	\end{aligned}
	\label{eq:ultrametric}
\end{equation}
\end{definition}
The ultrametric condition requires that every triangle formed by any three data points 
has to be an acute isosceles triangle,
which is a stronger condition than the triangle inequality in Equation \ref{eq: triangle}.

It is well known that a dendrograms can be represented as ultrametrics.
The relationship between dendrograms and ultrametric has been discussed in 
several works \cite{Roy2016, Jardine1971, Hartigan1985, Jain1988}.
The equivalence between dendrograms and ultrametrics was further established by Carlsson in \cite{Carlsson2010}.
A hierarchical clustering algorithm essentially outputs a map from finite metric space $(X, d)$ into finite ultrametric space $(X, u)$.

\subsection{An ultrametic function to encode prior relational information}
\label{subsec: ultrametric_function}
We now propose an ultrametric distance function to encode the tree structure between entities.
\begin{definition}
\label{definition: distance}
Let $T$ be a rooted tree of entity relationship.
For any node $v$ in $T$, let $T[ \, v \, ]$ be a subtree rooted at $v$, 
$leaves(T[ \, v \, ])$ be the leaves of the subtree,
and $|leaves(T[ \, v \, ])|$ be the number of leaf nodes.
For any leaf node $x_{i}, x_{j}$, the expression $x_{i} \lor x_{j}$ denotes their lowest common ancestor in T. 
We define a distance function between any leaf node $x_{i}, x_{j}$ as follows:
\begin{equation}
	\begin{aligned}
		u_{T}(x_{i}, x_{j}) = | leaves(T[ \, x_{i} \lor x_{j} \, ])| / |leaves(T[ \, root \, ] )|
	\end{aligned}
	\label{eq: distance}
\end{equation}
\end{definition}
In the above definition, $|leaves(T[ \, root \, ] )|$ is the total number of leaf nodes in the tree $T$.
It is a normalization constant to ensure the distance is between $[0, 1]$.
As an example, Figure \ref{fig: treesample} shows a small tree consisting of $6$ leaf nodes.
According to Definition \ref{definition: distance},
the distances between pairs $\{1, 2\}, \{1, 3\}, \{1, 4\}, \{1, 5\}, \{1, 6\}$
are $2/6, 4/6, 4/6, 6/6$ and $6/6$, respectively.
Although the tree structure in Figure \ref{fig: treesample} doesn't specify the exact distance values,
it encodes the hierarchical relations between data points.
It is easy to see that point $1$ is more similar to point $2$ than to point $3$ or $5$.

\begin{figure}[h]
	\centering
		\includegraphics[width=0.4\textwidth]{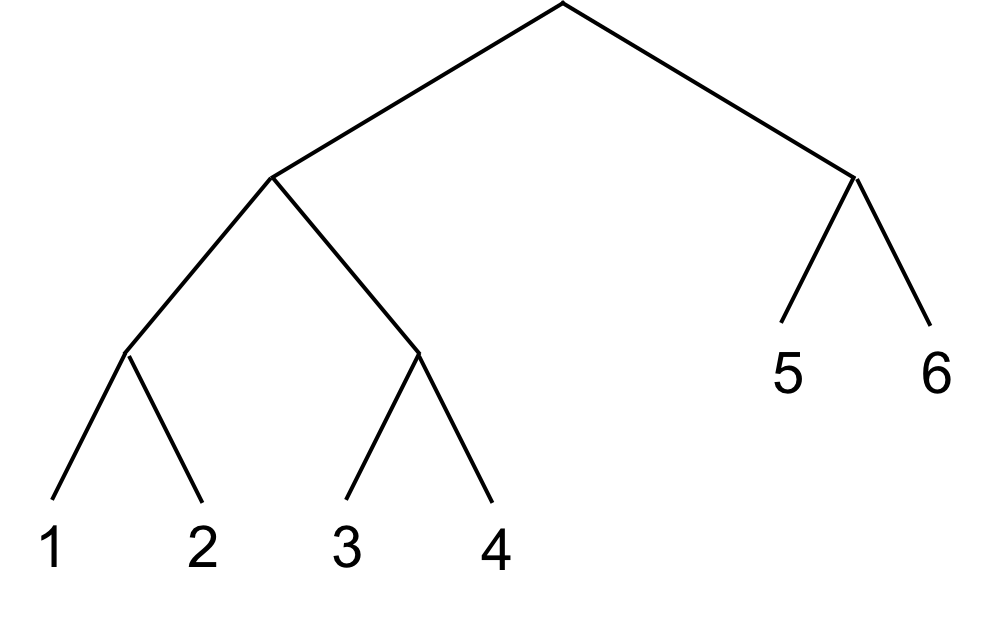}
		\caption{\texttt{A small tree of 6 leaf nodes}}
		\label{fig: treesample}
\end{figure}

\begin{lemma}
The distance function defined in Definition \ref{definition: distance} is an ultrametric.
\end{lemma}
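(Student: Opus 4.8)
The plan is to verify the three dissimilarity-measure axioms (non-negativity, identity, symmetry) together with the strong ultrametric inequality \eqref{eq:ultrametric} for the function $u_{T}$ of Definition \ref{definition: distance}. Non-negativity is immediate since $u_{T}$ is a ratio of two cardinalities of leaf sets, both of which are positive integers. Symmetry follows because the lowest common ancestor operation is symmetric: $x_{i} \lor x_{j} = x_{j} \lor x_{i}$, hence the subtrees, their leaf sets, and the resulting ratios coincide. For the identity axiom, note that the lowest common ancestor of $x_{i}$ with itself is the leaf node $x_{i}$, whose subtree has exactly one leaf, so $u_{T}(x_{i},x_{i}) = 1/|leaves(T[\,root\,])|$; here I would point out that this equals $0$ only in the degenerate single-leaf tree, so strictly speaking one should either restrict to the case of interest or rescale so that leaves are assigned height $0$ --- I expect the cleanest fix is to regard $u_{T}$ as defined on distinct pairs, or to subtract off the leaf-level constant, and I would flag this explicitly.

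The heart of the argument is the strong triangle inequality. First I would establish the key structural fact about lowest common ancestors in a rooted tree: for any three leaves $x_{i}, x_{j}, x_{k}$, among the three pairwise lowest common ancestors $x_{i}\lor x_{j}$, $x_{j}\lor x_{k}$, $x_{i}\lor x_{k}$, one of them is an ancestor of (or equal to) the other two. Concretely, the node $w = x_{i}\lor x_{j}\lor x_{k}$ (the LCA of all three) is an ancestor of all three pairwise LCAs, and at least two of the three pairwise LCAs must actually equal $w$: this is because if we look at which child subtree of $w$ each of the three leaves falls into, the pigeonhole principle forces at least two of them into distinct child subtrees, and the pair straddling two different children of $w$ has $w$ as its LCA. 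So at least two of the three quantities $|leaves(T[\,x_{i}\lor x_{j}\,])|$, $|leaves(T[\,x_{j}\lor x_{k}\,])|$, $|leaves(T[\,x_{i}\lor x_{k}\,])|$ are equal to $|leaves(T[\,w\,])|$, which is the largest of the three (since a subtree's leaf set is contained in that of any ancestor's subtree, cardinality is monotone along ancestor chains).

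Given that structural fact, the ultrametric inequality is essentially automatic: fix the pair, say $(x_{i},x_{k})$, appearing on the left-hand side of \eqref{eq:ultrametric}. Either $x_{i}\lor x_{k} = w$, in which case $u_{T}(x_{i},x_{k})$ is the maximum of the three values and in particular $u_{T}(x_{i},x_{k}) = \max\bigl(u_{T}(x_{i},x_{j}), u_{T}(x_{j},x_{k})\bigr)$ because at least one of those two also equals $w$; or $x_{i}\lor x_{k} \neq w$, in which case both $x_{i}\lor x_{j}$ and $x_{j}\lor x_{k}$ must equal $w$ (two of the three are always $w$, and by assumption $x_{i}\lor x_{k}$ is not), so $u_{T}(x_{i},x_{k}) \le u_{T}(x_{i},x_{j}) = u_{T}(x_{j},x_{k}) = \max\bigl(u_{T}(x_{i},x_{j}), u_{T}(x_{j},x_{k})\bigr)$. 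Dividing through by the common normalization constant $|leaves(T[\,root\,])|$ preserves all inequalities, so \eqref{eq:ultrametric} holds in every case.

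The main obstacle is really just making the lowest-common-ancestor combinatorics rigorous and watertight --- in particular proving cleanly that, of the three pairwise LCAs, (at least) two coincide with the triple LCA $w$, and that leaf-set cardinality is monotone nondecreasing along any root-ward path. Both are standard tree facts, but they deserve a careful one- or two-sentence argument each rather than being asserted; everything else (non-negativity, symmetry, the reduction of the inequality to the cardinality statement) is routine. I would also be careful to state the identity-axiom caveat rather than gloss over it.
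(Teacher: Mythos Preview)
Your argument is correct and rests on the same LCA combinatorics as the paper's proof. The paper is slightly more economical: rather than introducing the triple LCA $w$ and arguing that two of the three pairwise LCAs must equal it, the paper simply cases on whether $x_k$ lies in the subtree $T[\,x_i \lor x_j\,]$ or not --- if so, $x_i \lor x_k$ is a descendant of $x_i \lor x_j$ and hence $u_T(x_i, x_k) \le u_T(x_i, x_j)$; if not, $x_i \lor x_k = x_j \lor x_k = (x_i \lor x_j) \lor x_k$ so $u_T(x_i, x_k) = u_T(x_j, x_k)$. This is exactly your two cases reorganised, and a shade more direct. On the other hand, you are more thorough than the paper: its proof checks only the strong triangle inequality and skips the dissimilarity axioms entirely, and in particular does not address the identity-axiom caveat you correctly flag, namely that $u_T(x_i, x_i) = 1/|leaves(T[\,root\,])| \ne 0$ in general.
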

\begin{proof}
For any leaf node $x_{i}, x_{j}, x_{k}$ in $T$,
$x_{k}$ is either in the subtree $T[ \, x_{i} \lor x_{j} \, ]$ or not in the subtree $T[ \, x_{i} \lor x_{j} \, ]$.
If $x_{k}$ is in the subtree $T[ \, x_{i} \lor x_{j} \, ]$, then $u_{T}(x_{i}, x_{k}) \leq u_{T}(x_{i}, x_{j})$.
If $x_{k}$ is not in the subtree $T[ \, x_{i} \lor x_{j} \, ]$, 
we have $T[ \, x_{i} \lor x_{k} \, ] = T[ \, x_{j} \lor x_{k} \, ] = T[ \, (x_{i} \lor x_{j}) \lor x_{k}\, ]$,
then $u_{T}(x_{i}, x_{k}) = u_{T}(x_{j}, x_{k})$.
Therefore, in either case, $u_{T}(x_{i}, x_{k}) \leq max(u_{T}(x_{i}, x_{j}), u_{T}(x_{j}, x_{k}))$.
\end{proof}
Because of the equivalence between dendrograms and ultrametrics,
once we encode the tree using an ultrametric distance,
there is a unique dendrogram corresponding to it.

A pairwise distance function quantifies the dissimlarity between any pair of points.
However, it doesn't define the distance between clusters of points. 

Linkage-based hierarchical clustering algorithms calculate distance between clusters based on different heuristics.
Let $\ell(C, C', d)$ be a linkage function that assigns a non-negative value to each pair of non-empty clusters $\{C, C'\}$
based on a pairwise distance function $d$.
Some choices of linkage functions are:
\begin{align*}
	&\text{$1.$ $\ell_{SL}(C, C', d) = {\underset{x \in C, x' \in C'} {min} d(x, x')} $, single linkage}\\
	&\text{$2.$ $\ell_{CL}(C, C', d) = {\underset{x \in C, x' \in C'} {max} d(x, x')} $, complete linkage} \\
	&\text{$3.$ $\ell_{AL}(C, C', d) = {\underset{x \in C, x' \in C'} {\Sigma} d(x, x')} / (|C| \cdot |C'|)$, average linkage}
\end{align*}
All three linkage functions lead to a popular hierarchical clustering algorithm.
However, it is known that the results from average link and complete link algorithms depend on the ordering of points,
while single link is exempted from this undesirable feature.
The cause lies in the way that an algorithm deals with situation when more than two points are equally good candidates for merging next.
Since we merge the data points two at a time, then the merge order will determine the final structure of the dendrogram.
However, it can be shown that when an ultrametric distance is used, all three linkage-based algorithms will output the same dendrogram.
\begin{theorem}
The dendrogram structure from a complete linkage or average linkage hierarchical algorithm 
is independent of the merge order of equally good candidates when the distance measure is an ultrametric. 
(The proof is in Appendix \ref{appendix: completelink}.)
\end{theorem}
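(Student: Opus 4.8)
\emph{Proof plan.} The plan is to show that, when the input dissimilarity is an ultrametric $u$, complete linkage and average linkage each reconstruct the \emph{canonical} dendrogram of $u$ — the one whose level set at height $t$ is the partition $P_t$ into the equivalence classes of the relation $x \sim_t y \iff u(x,y) \le t$ — and that the reconstruction is the same no matter how ties among equally good candidate merges are broken. Since $u$ is an ultrametric, Equation \ref{eq:ultrametric} makes $\sim_t$ transitive, hence a genuine equivalence relation, so the partitions $P_t$ are well defined and nested; it therefore suffices to prove that the algorithm's output $\theta$ satisfies $\theta(t) = P_t$ for every $t$, independently of the merge order.

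First I would record the two ultrametric facts that drive the argument. (i) Among the three pairwise distances of any triple, the two largest are equal — the ``acute isosceles'' property already noted after Equation \ref{eq:ultrametric}. (ii) Using (i) together with the fact that $u$ takes only finitely many values $t_1 < t_2 < \dots < t_k$ (so any distance that is $> t_{i-1}$ and $\le t_i$ must equal $t_i$), I would establish a \emph{separation lemma}: if two current clusters $C, C'$ sit inside a common $\sim_{t_i}$-class but in different classes of the finer partition built so far, then every cross distance $u(c,c')$ with $c \in C, c' \in C'$ equals exactly $t_i$, whence $\ell_{CL}(C, C', u) = \ell_{AL}(C, C', u) = t_i$; while if $C, C'$ lie in different $\sim_{t_i}$-classes then every cross distance strictly exceeds $t_i$, so $\ell_{CL}$ and $\ell_{AL}$ strictly exceed $t_i$ as well.

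The proof is then an induction on the distinct values $t_1 < \dots < t_k$ of $u$, with the invariant: at the moment the minimum linkage among the current clusters first exceeds $t_i$, the current partition equals $P_{t_i}$ (the base case $t_0$ being the singleton partition). For the inductive step, starting from $P_{t_{i-1}}$, the separation lemma shows that the only merges available at cost $t_i$ are between clusters sitting inside a common $\sim_{t_i}$-class; that such a merge stays inside that class; and that the cost-$t_i$ availability persists until the class is fully collapsed, since cross-class merges have linkage $> t_i$. Hence, whatever order the tied cost-$t_i$ merges are carried out in, the algorithm collapses each $\sim_{t_i}$-class to a single cluster and reaches precisely $P_{t_i}$, with all of these merges recorded at the common height $t_i$; so $\theta$ jumps from $P_{t_{i-1}}$ to $P_{t_i}$ exactly at $t_i$, irrespective of the tie-breaking. (The same argument applies verbatim to single linkage, so an equivalent way to phrase the conclusion is that complete and average linkage agree with the manifestly order-independent single linkage on ultrametrics.)

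I expect the main obstacle to be the inductive step — specifically, checking that while a level is being processed no merge ever crosses a $\sim_{t_i}$-class and that the within-class linkage value stays pinned at exactly $t_i$ as partial merges accumulate. This is precisely where fact (i) does the real work: it forces each cross-cluster distance set to be a single constant, which is what makes complete and average linkage behave identically to single linkage. One should also be careful to read ``dendrogram structure'' as the map $\theta$ of the dendrogram definition (equivalently, the output ultrametric): the particular binary bracketing of the tied merges inside a single height level is genuinely order-dependent, but it leaves $\theta$ unchanged, and the theorem is a statement about $\theta$.
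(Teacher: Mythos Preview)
Your proposal is correct but follows a genuinely different route from the paper's proof. The paper argues locally: it first lifts the ultrametric inequality from points to clusters under complete linkage, showing $D(c,a) \le \max(D(a,b),D(b,c))$ for any three clusters, and then observes that a tie $D(a,b)=D(b,c)$ forces $D(a,c)\le D(a,b)$, so the apparently ambiguous pair was never actually the minimum and the merge order is determined after all. You instead argue globally: you name the target dendrogram in advance as the canonical ball-partition structure $t\mapsto P_t$ of the ultrametric, and induct over the finitely many distance values to show that any tie-breaking rule collapses exactly the $\sim_{t_i}$-classes at height $t_i$.

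What each approach buys: the paper's argument is short and isolates the single algebraic fact doing the work (the cluster-level ultrametric inequality), but as written it treats only a three-cluster tie configuration and leaves the general multi-way tie case to the reader. Your approach is longer to set up but handles arbitrary simultaneous ties uniformly, yields the average-linkage case for free via the separation lemma (all cross distances between two sub-clusters of a common $\sim_{t_i}$-class are the single constant $t_i$, so min, max, and average coincide), and makes the ``agrees with single linkage'' corollary immediate. Your closing caveat that only the map $\theta$ is order-independent, not the internal binary bracketing at a fixed height, is a worthwhile clarification that the paper does not make explicit.
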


As an example, for the small tree defined in Figure \ref{fig: treesample} and the distance function defined in Equation \ref{eq: distance}, 
all three linkage-based algorithms produce the same dendrogram presented in Figure \ref{fig: dendrogram}.
The dendrogram faithfully encodes all the grouping relations between leaf nodes from the original tree.

\begin{figure}[h]
	\centering
		\includegraphics[width=0.4\textwidth]{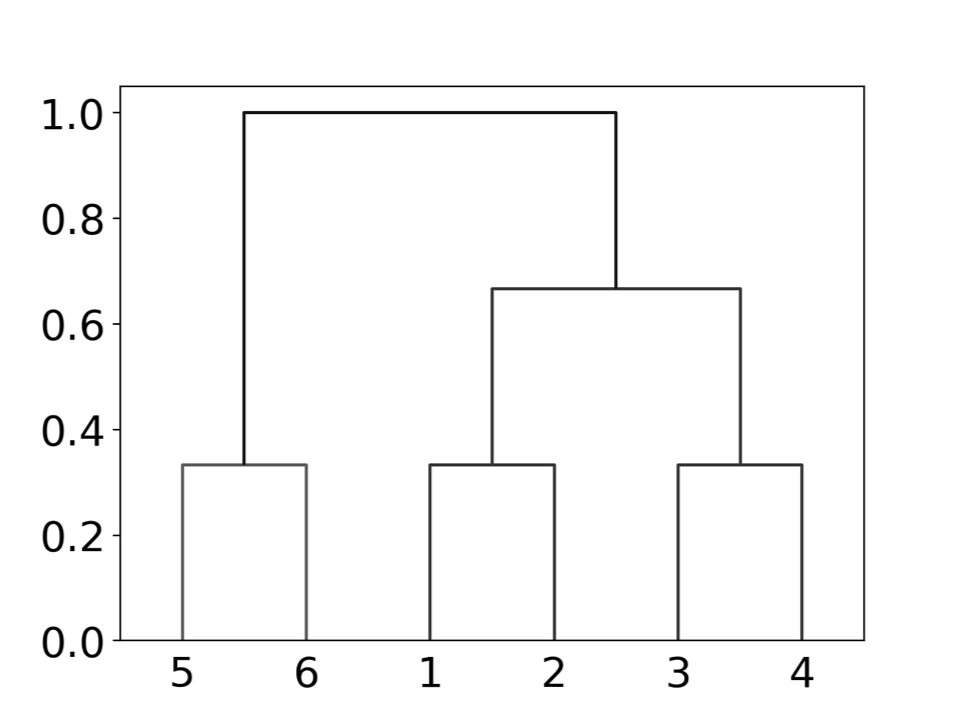}
		\caption{\texttt{Dendrogram of the example tree}}
		\label{fig: dendrogram}
\end{figure}  
 
\subsection{Combine the two distance components}
\label{subsec:combine}
To incorporate the external ontological information into the hierarchical clustering, 
we combine the as-defined ultrametric distance function with the problem-specific distance measure using a weighted sum of the two components.
Let $d_{P}$ be the problem-specific distance
(we normalize it so that its value is between $[0, 1]$), 
and $u_{T}$ be the ultrametric distance encoding the prior ontological knowledge.
The new distance function to be fed into a hierarchical clustering algorithm can be constructed as follows:
\begin{equation}
	\begin{aligned}
		d(x_{i}, x_{j}) = (1-\alpha) \cdot d_{P}(x_{i}, x_{j}) + \alpha \cdot u_{T}(x_{i}, x_{j}) 
	\end{aligned}
	\label{eq:newdistance}
\end{equation}
Similar to some regularized machine learning techniques,
the ultrametric distance is added as a penalty term to the original pairwise distance.
When $\alpha = 0$,
we go back to the unregulated hierarchical clustering case,
in which only the problem-specific distance is used.
When $\alpha = 1$, we recover the relational structure from the external source.
Essentially, $\alpha \cdot u_{T}(x_{i}, x_{j})$ measures the minimal effort that a hierarchical clustering algorithm needs to make in order to join $x_{i}$ and $x_{j}$.
The hyper-parameter $\alpha$ determines the proportion that the prior knowledge contribute to the clustering.
It reflects our confidence in each component.
Since the ultrametric term is added pair-wisely, 
the new distance function fits naturally with the global object function proposed in \cite{Dasgupta2015}.
In that context, the ultrametic term is a soft constraint added to the object function. 

The tuning of the hyper-parameter $\alpha$ can be achieved in different ways depending on the availability of external labels or performance metric.
Without external gold standard, the tuning can be conducted by maximizing some internal quality measures such as Davies-Bouldin index or Dunn index. 
With the availability of external labels, parameter $\alpha$ can be tuned in a cross-validation manner.
Various performance measures have been proposed to evaluate clustering results given a gold standard \cite{Zhao2002}.
It should be noted that some performance metrics require conversion of a dendrogram into a flat partition.
In those cases, the number of the clusters $K$ is also hyper-parameter to tune.
If the dendrogram itself, not any clustering found in it, is the desired outcome, 
we can aggregate the performance metric across different $K$ for a given $\alpha$, 
and choose the dendrogram corresponding to the $\alpha$ with the best overall performance.  

\subsection{Property based approach for clustering algorithm selection}
\label{subsec:property}
In Equation \ref{eq:newdistance}, the overall distance function is no longer ultrametric 
if the problem-specific distance $d_{P}$ is not ultrametric.
To remediate the problem, one could convert the problem-specific distance function into an ultrametric distance.
However, finding the closest ultrametric to a noisy metric data is $NP$-complete.
We also need to specify a measure of distortion between the original metric and the approximated ultrametric \cite{DiSumma2015}.
One could also try to feed the problem-specific distance into a hierarchical clustering algorithm, and let the algorithm output an ultrametric for us.  
In fact, it is shown in \cite{Carlsson2010} that single linkage hierarchical clustering produces ultrametric outputs exactly
as those from a maximal sub-dominant ultrametric construction,
which is a canonical construction from metric to ultrametric.

In addition to the above property, 
single linkage algorithm also enjoys other properties that are important to applications such as taxonomy building.
In \cite{Ackerman2016}, 
Ackerman shows that all linkage-based hierarchical algorithms satisfying the locality and outer consistency properties.
However, it is observed that both complete linkage and average linkage are not stable under small perturbation,
and not invariant under permutation of data label \cite{Carlsson2010}.   
It is shown that only single linkage algorithm is stable in the Gromov-Hausdorff sense
and has nice convergence property \cite{Eldridge2015} .
Gromov-Hausdorff distance measures how far two finite spaces are from being isometric.
The stability property is critical to our distance function defined in Equation \ref{eq:newdistance}
since we'd like a continuous map from metric spaces into dendrograms as we change the hyper-parameter $\alpha$.
Based on the stability property, 
we can define the structure resemblance discussed in the problem statement Section \ref{sec: setting}.
We'd like the dendrogram from our semi-supervised method to be similar to the dendrogram encoding prior domain knowledge
as measured by Gromov-Hausdorff distance.
It can be shown that for two dendrograms $u, u'$ generated from single linkage algorithm defined on the same data set $X$,     
their Gromov-Hausdorff distance is bounded above by the $L_{\infty}$ norm of the difference between two underlying metric spaces $d, d'$.  

One drawback of single linkage algorithm is that it is not sensitive to variations in the data density,
which can cause \textquotedblleft chaining effect\textquotedblright.
However, we believe that this \textquotedblleft chaining effect\textquotedblright is alleviated in our semi-supervised approach
since we use a prior tree to regulate the dendrogram structure from clustering.
Based on the above reasons, 
we choose to use single linkage algorithm to solve our semi-supervised hierarchical clustering problem.

\begin{algorithm}[t]
\SetAlgoLined
\KwIn{dataset $X = \{ x_{1}, x_{2}, ..., x_{n} \} $, external tree structure $T$ defined on $X$, task-specific performance metric $\mu$}
\KwOut{dendrogram $\theta : [ \, 0,  \infty) \to \wp(X)$ that performs best in terms of $\mu$}
Pre-partition $X$ into $k$ sub-clusters\;
\For{each sub-cluster}{
	Calculate task-specific pariwise distance $d_{P}(x_{i}, x_{j})$\;
	Calculate ultrametric distance $u_{T}(x_{i}, x_{j})$ based on tree $T$\;
	\For{each $(\alpha, K)$ on the search grid}{
		Build dendrogram using Single-Link\;
		Convert the dendrogram into $K$ flat partitons\;
		Evaluate performance metric $\mu$\;
		}
	Find optimal $\alpha$ for each sub-cluster by aggregating $\mu$ across different $K$\;
}
Combine $k$ sub-clusters into one dendrogram by Single-Link\
\caption{Semi-supervised hierarchical clustering}
\label{alg: one}
\end{algorithm}

\subsection{Computational complexity}
All agglomerative hierarchical clustering methods need to compute the distance between all pairs in the dataset.
The complexity of this step, in general, is $O(n^2)$, where $n$ is the number of data points.
In each of the subsequent $n-2$ merging iterations,
the algorithm needs to compute the distance between the most recently created cluster and all other existing clusters.
Therefore, the overall complexity is $O(n^3)$ if implemented naively.
If done more cleverly, the complexity can be reduced to $O(n^2log(n))$.

In our approach, the most computationally expensive step is the calculation of pairwise ultrametric distance based on Equation \ref{eq: distance} since it requires finding the lowest common ancestor of two leaf nodes within a tree.
The complexity of finding the lowest common ancestor is $O(h)$, where $h$ is the height of the tree 
(length of longest path from a leaf to the root).
In the worst case $O(h)$ is equivalent to $O(n)$, but if the tree is balanced, $O(log(n))$ can be achieved.
It also requires $O(h)$ space.
Fast algorithm exists that can provide constant-time queries of lowest common ancestor by first processing a tree in linear time.

For large datasets, one way to speed up the computation is pre-cluster the data points into $k$ clusters by 
either leveraging external ontological information (cutting the tree at high levels) or by using a partition-based clustering algorithm. 
Each of the $k$ clusters is then treated separately, and single-link hierarchical clustering algorithm is employed
to build a dendrogram for each sub-cluster.
Finally, the $k$ dendrograms are combined into one dendrogram by applying single-link algorithm which treats each of the $k$ dendrograms as an internal node.
The overall complexity in this case is $O(k(\frac{n}{k})^2log(\frac{n}{k}) + k^2log(k))$.
For reasonably large $k$, the computation time can be greatly reduced.
Within each sub-cluster, the search for optimal $\alpha$ is conducted in a cross-validation manner by evaluating a task-specific metric.
The full algorithm including hyper-parameter tuning is presented in Algorithm \ref{alg: one}.

\section{Case Study: A Customer Behavior Based Product Taxonomy}
\label{sec: experiment}
In this session, we apply the proposed method to the construction of a customer behavior based product taxonomy for an Amazon service.
The goal here is to build a taxonomy that captures substitution effects among different products and product groups.

To achieve this goal, 
we could define a dissimilarity measure between products based on a customer behavior metric,
and group products using a hierarchical clustering algorithm.
However, due to the huge size of Amazon selection and customer base,
customer behavior data is usually sparse and noisy.
Furthermore, for taxonomy building purpose,
we'd like the grouping to be consistent across all levels, 
and the resulting hierarchy to be logical as perceived by a human reader.
As discussed in the introduction,
clustering with only a dissimilarity measure is an ill-posed problem.
It's hard for a clustering algorithm to recover the data partitions that satisfies various criteria of a concrete task.
On the other hand, human-designed taxonomies usually perform well in terms of consistency and human readability. 
In this work, we employ a semi-supervised approach for the building of a product taxonomy,
leveraging the ontological information from existing Amazon-wide browse hierarchy.

\subsection{Amazon browse hierarchy}
Amazon Browse enables customers' discovery experience by organizing Amazon's product selection into a discovery taxonomy.
The browse hierarchy is loaded every time a customer visits Amazon website. 
The leaf nodes of Amazon browse hierarchy represent a group of products of the same type such as coffee-mug, dvd-player etc.
The internal nodes represent higher levels of product groupings. 
While being important in influencing customer searches,
Amazon browse trees are not built to reflect program-specific product substitution effects.
They determines what customer see but not their following decisions after seeing the search results.


Due to the huge size of Amazon browse hierarchy, 
we pre-clusterd the data into segments as in Algorithm \ref{alg: one}.
Pariwise distance between leaf nodes are the calculated based on Equation \ref{eq: distance} to incorporate the ontological structure of the browse hierarchy.

\subsection{Customer behavior based dissimilarity measure}
To construct a customer behavior based dissimilarity measurement between leaf nodes,
we first use Latent Dirichlet Allocation (LDA) \cite{Blei2003, BleiDavidCarinLawrence2010} to obtain an embedding for each leaf node based on
customers' click, cart-add and purchase actions for the Amazon service.
To apply LDA to customer searches,
we treat each search keyword as a document, 
and each leaf node as a word in the vocabulary.
Each element in the document-word matrix stores the frequency of certain customer actions such as clicks, cart-adds, purchases
performed on a particular leaf node within the context that customer search for a given keyword.
Provided with the number of topics,
LDA outputs the probability of word appears in each topic.
We use the vector of topic probabilities for each leaf node as the embedding. 
LDA essentially is used here as a dimensionality reduction method similar to matrix factorization.

We then calculate the cosine dissimilarity between pairs of leaf nodes using the embeddings.
Since each element in the embedding is a probability, a positive number, the cosine dissimilarity is between $0$ and $1$. 
The cosine dissimilarity between two leaf nodes $x_{i}$ and $x_{j}$, is calculated as:
\begin{equation}
	\begin{aligned}
		d_{cosine} (x_{i}, x_{j}) = 1 - x_{i} \cdot x_{j} / (\|x_{i}\|_{2} \cdot \|x_{j}\|_{2})
	\end{aligned}
	\label{eq:cosinedistance}
\end{equation}

\subsection{Hyper-parameter tuning by maximizing the performance of substitution group}
Given the problem-specific distance measure,
and the ultrametric distance encoding Amazon browse node hierarchy,
we can combine the two components to form the new distance measure in our semi-supervised hierarchical clustering problem.
As discussed in section \ref{sec: method},
the weighting parameter $\alpha$ can be tuned in a cross-validation manner by optimizing a task-specific performance metric.

\begin{figure*}[h]
	\centering
		\includegraphics[width=0.8\textwidth]{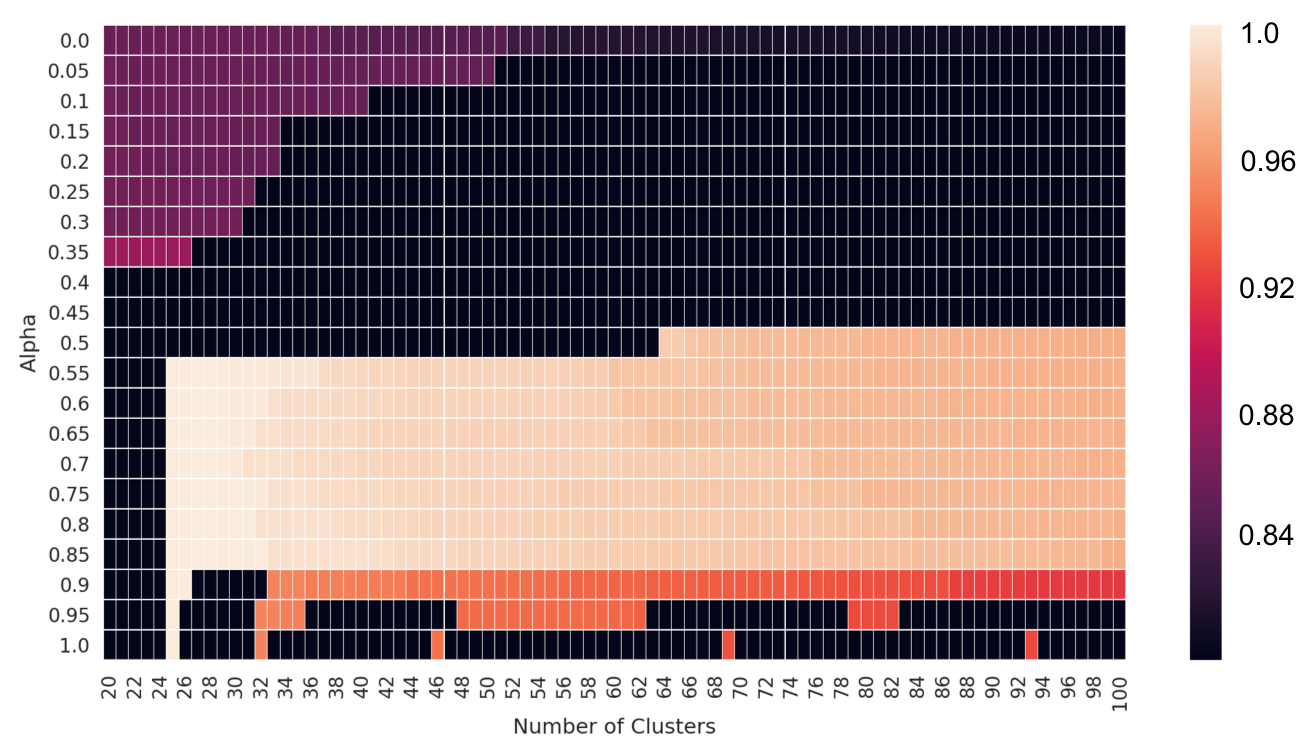}
		
		\caption{\texttt{Heatmap of cross-validation result for normalized purity metric. 
		              The lighter the color, the higher the purity. 
			      Due to the discrete nature of the tree structure, 
			      certain numbers of clusters are not selectable, 
			      shown as black blocks in the heatmap.}}
		\label{fig:performance}
\end{figure*} 

For evaluation, we optimize the performance of using the resulting clusters as substitution groups,
within which products are substitutable with each other.
It's reasonable to assume that customers who search for the same keyword share similar type of demand.
If all the customers search for the same keyword end up purchasing items from the same substitution group,
then our definition of the substitution group captures all the substitution effect for that demand.
If customers search for the same keyword end up purchasing items from the many different substitution groups,
then our grouping of products does a poor job in capturing product substitution. 
Based on the above rationale, we define three metrics to capture of the substitution performance.
$1.$ \textquotedblleft Purity\textquotedblright, 
which is defined as the average percentage of customer purchases falling within the top substitution group for each search keyword.
$2.$ \textquotedblleft Entropy\textquotedblright, 
for each search keyword, there is a categorical distribution of customer purchases from different substitution groups.
The average entropy of the categorical distribution for each search keyword defines the entropy metric.
$3.$ \textquotedblleft Weighted entropy\textquotedblright,
this metric is similar to the Entropy metric except that each keyword is weighted by the number of customer purchases.
For Purity metric, high values are preferred.
For Entropy metrics, low values are better. 

Based on the performance metrics, 
our experiment was conducted as follows:
A full month of customers' search data was used as the training data to obtain the LDA embedding for the leaf nodes.
A grid search of hyper-parameter $\alpha$ and the number of clusters $K$ was conducted using cross-validation on the data from the first half of the following month.
Figure \ref{fig:performance} shows the cross-validation result as a heat map of the normalized purity metric.
The lighter the color, the higher the purity.
Due to the discrete nature of the tree structure, 
certain numbers of flat clusters can not be formed from the dendrograms.
Those cases are shown as black squares in the heatmap.
As we can see from the figure,
our semi-supervised approach achieves consistently better performance than both
the pure customer behavior based dissimilarity $(\alpha = 0)$ and pure browse taxonomy $(\alpha = 1)$.
Similar trends can be observed for entropy-based metrics (not shown in this paper).
It can be noted from the figure that using the pure browse structure based taxonomy is not flexible in terms of number of clusters.
By mixing the two distance components, we can create hierarchy of leaf nodes at different levels of granularity.
Based on the cross-validation result, we select the best $\alpha$ and test it on the data from the second half of the month.  
The test result is presented in Table \ref{table:1}.
To facilitate the comparison with pure browse node based taxonomy, we choose the cluster numbers of 46 and 69 for testing.
As we can see from the table, the semi-supervised approach performs best during the testing period across all three metrics
(highest in Purity, lowest in Entropy metrics).

\begin{table}[h!]
\centering
\begin{tabular}{||c c c c c||} 
 \hline
Clusters & $\alpha$ & Purity & Entropy & Weighted Entropy \\ [0.4ex] 
 \hline
 46 & 0.0 & 0.93 & 1.0 & 1.0 \\ 
 46 & 0.85 & 1.0 & 0.68 & 0.72 \\
 46 & 1.0 & 0.96 & 0.72 & 0.80 \\
 69 & 0.0 & 0.92 & 1.0 & 1.0 \\
 69 & 0.7 & 1.0 & 0.69 & 0.71 \\
 69 & 1.0 & 0.96 & 0.77 & 0.79 \\[0.2ex] 
 \hline
\end{tabular}
\caption{Testing results on substitution performance (Purity values are normalized against best performance.
Entropy values are normalized against worst performance.)}
\label{table:1}
\end{table}
 
Figure \ref{fig:coffee_tea} presents the evolution of dendrogram structure 
for the segment of \textquotedblleft Coffee, Tea and Cocoa\textquotedblright.
As we decrease $\alpha$ (increase mixing), one can observe mixing of coffee and tea at lower level of the dendrograms,
which reflects a notion of substitution between the two product groups.
In another example, figure \ref{fig:rice_bean} presents the dendrogram evolution for Beans, Grains and Rice segment.
In that case, we can observe a finer grouping of products within either rice group or beans group as we decrease $\alpha$.
However, products from different groups don't mix,
which means the substitution effect is not as significant as that between Coffee and Tea products.

\begin{figure*}[!htb]
	\centering
		\includegraphics[width=0.75\textwidth]{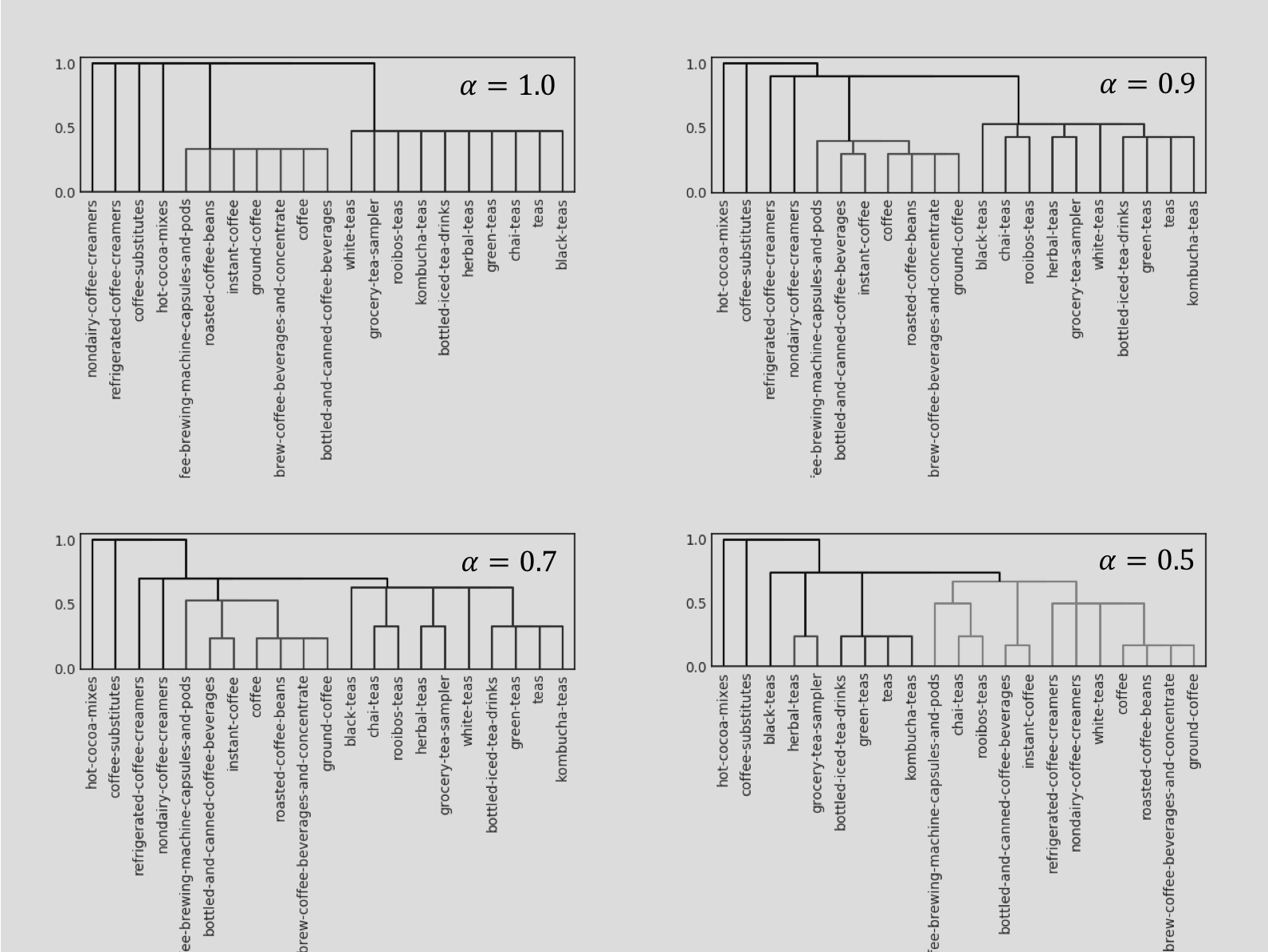}
		\caption{\texttt{Structure evolution of dendrograms for Coffee, Tea and Cocoa segment}}
		\label{fig:coffee_tea}
\end{figure*}  

\begin{figure*}[!htb]
	\centering
		\includegraphics[width=0.75\textwidth]{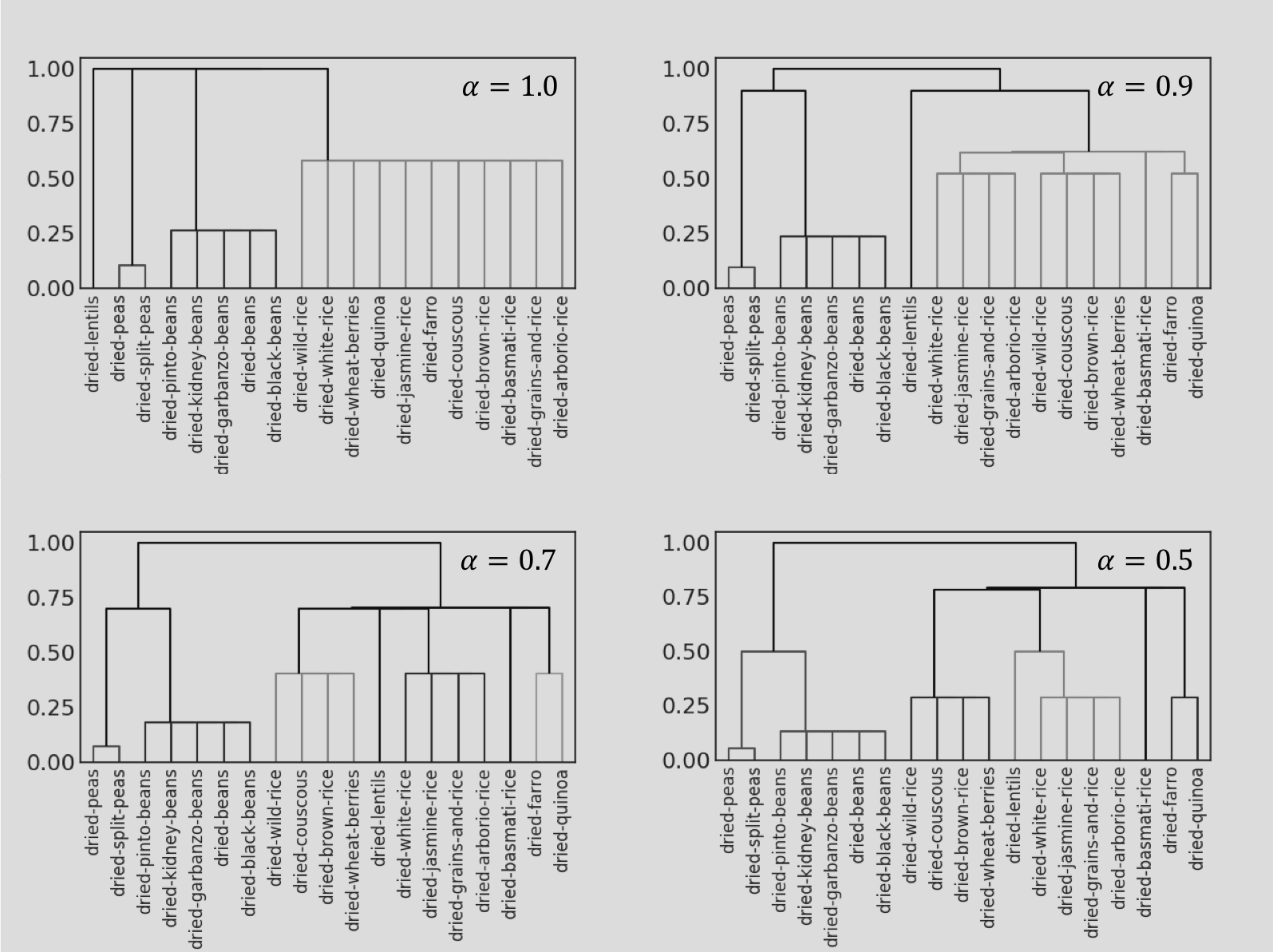}
		\caption{\texttt{Structure evolution of dendrograms for Beans, Grains and Rice segment}}
		\label{fig:rice_bean}
\end{figure*}  

\section{Conclusion}
\label{sec: conclusion}
Hierarchical clustering is a a prominent class of clustering algorithms.
It has been the dominant approach to constructing embedded classification schemes. 
In this paper, we propose a novel method of incorporating prior domain knowledge 
about entity relations into hierarchical clustering.
By encoding the prior relational information using an ultrametric distance function,
we have shown that the popular linkage based hierarchical clustering algorithms can faithfully
recover the prior relational structure between entities.
We construct the semi-supervised clustering problem by applying the ultrametric distance
as a penalty term to the original task-specific distance measure.
We choose to use single link algorithm to solve the problem due to its favorable stability and convergence properties. 
As an example, we apply the proposed method to the construction of a customer behavior based product taxonomy for an Amazon service leveraging an Amazon-wide browse structure.  
Our experiment results show that the semi-supervised approach achieves better performance than the clustering
purely based on task-specific distance and the clustering purely based on external ontological structure.

\appendix
\section{Complete linkage hierarchical clustering with ultrametric distance}
\label{appendix: completelink}
It is known that in a metric space, when there are two or more equally good candidates for merging at a certain step, 
the results from complete link hierarchical clustering algorithms depend on the ordering of merging.
In this section, we show that if the distance function is ultrametric, 
the dendrogram structure from complete linkage does not depend on the merging order.

\begin{proof}
We first show under complete link and ultrametric assumptions,
the ultrametric condition also holds among clusters.
Let $a, b, c$ represent three disjoint clusters (can be singletons), 
we want to show $D(c, a) \leq max(D(a, b), D(b, c))$.

Under complete linkage, without loss of generality, we assume 
$x_{1}, x_{6} \in a, x_{2}, x_{3} \in b, x_{4}, x_{5} \in c$, and
\begin{equation}
	\begin{aligned}
		D(a, b) & = {\underset{x \in a, x' \in b} {max} u(x, x')} = u(x_{1}, x_{2}) \\
		D(b, c) & = {\underset{x \in b, x' \in c} {max} u(x, x')} = u(x_{3}, x_{4}) \\
		D(c, a) & = {\underset{x \in c, x' \in a} {max} u(x, x')} = u(x_{5}, x_{6})
	\end{aligned}
	\label{eq:proof1}
\end{equation}
Then we have, 
\begin{equation}
	\begin{aligned}
		max(D(a, b), D(b, c)) & = max(u(x_{1}, x_{2}), u(x_{3}, x_{4}) \\
						& \geq max(u(x_{6}, x_{2}), u(x_{2}, x_{5}) \\
						& \geq u(x_{5}, x_{6}) \\
						& = D(c, a)
	\end{aligned}
	\label{eq:proof2}
\end{equation}

We now show for any disjoint clusters $a, b, c, d$,
if at a certain stage $D(a, b) = D(b, c)$ are smaller than other cluster-cluster distances,
which means $(a, b)$ and $(b, c)$ are equally good candidates for merge next.
Regardless of merging order between $(a, b)$ and $(b, c)$, 
cluster $d$ will always merge last. 

In fact, due to ultrametric condition, if $D(a, b) = D(b, c)$,
then $max(D(a, b), D(b, c)) = D(a, b) = D(b, c) \geq D(a, c)$.
It means $(a, c)$ will merge before $(a, b)$ or $(b, c)$.
Then, there is no ambiguity about merging order.
$(a, c)$ merges first, then $(ac, b)$. 
$d$ will always merge last to the cluster.
\end{proof}

In a similar manner, we can show the same result for average link hierarchical clustering with ultrametric distance.

\bibliographystyle{ACM-Reference-Format}
\bibliography{clustering}


\begin{thebibliography}{34}


\ifx \showCODEN    \undefined \def \showCODEN     #1{\unskip}     \fi
\ifx \showDOI      \undefined \def \showDOI       #1{#1}\fi
\ifx \showISBNx    \undefined \def \showISBNx     #1{\unskip}     \fi
\ifx \showISBNxiii \undefined \def \showISBNxiii  #1{\unskip}     \fi
\ifx \showISSN     \undefined \def \showISSN      #1{\unskip}     \fi
\ifx \showLCCN     \undefined \def \showLCCN      #1{\unskip}     \fi
\ifx \shownote     \undefined \def \shownote      #1{#1}          \fi
\ifx \showarticletitle \undefined \def \showarticletitle #1{#1}   \fi
\ifx \showURL      \undefined \def \showURL       {\relax}        \fi
\providecommand\bibfield[2]{#2}
\providecommand\bibinfo[2]{#2}
\providecommand\natexlab[1]{#1}
\providecommand\showeprint[2][]{arXiv:#2}

\bibitem[\protect\citeauthoryear{Ackerman and Ben-David}{Ackerman and
  Ben-David}{2016}]%
        {Ackerman2016}
\bibfield{author}{\bibinfo{person}{Margareta Ackerman} {and}
  \bibinfo{person}{Shai Ben-David}.} \bibinfo{year}{2016}\natexlab{}.
\newblock \showarticletitle{{A Characterization of Linkage-Based Hierarchical
  Clustering}}.
\newblock \bibinfo{journal}{\emph{Journal of Machine Learning Research}}
  \bibinfo{volume}{17} (\bibinfo{year}{2016}), \bibinfo{pages}{1--17}.
\newblock
\showISSN{1532-4435}


\bibitem[\protect\citeauthoryear{Ackerman, Ben-David, and Loker}{Ackerman
  et~al\mbox{.}}{2010}]%
        {Ackerman2010}
\bibfield{author}{\bibinfo{person}{Margareta Ackerman}, \bibinfo{person}{Shai
  Ben-David}, {and} \bibinfo{person}{David Loker}.}
  \bibinfo{year}{2010}\natexlab{}.
\newblock \showarticletitle{{Towards Property-Based Classification of
  Clustering Paradigms}}.
\newblock \bibinfo{journal}{\emph{Nips 2010}} (\bibinfo{year}{2010}),
  \bibinfo{pages}{1--9}.
\newblock
\showISBNx{9781617823800}
\urldef\tempurl%
\url{https://papers.nips.cc/paper/4101-towards-property-based-classification-of-clustering-paradigms.pdf}
\showURL{%
\tempurl}


\bibitem[\protect\citeauthoryear{arXiv~preprint Cmp-lg/9511007 and undefined
  1995}{arXiv~preprint Cmp-lg/9511007 and undefined 1995}{[n. d.]}]%
        {Resnik1995}
\bibfield{author}{\bibinfo{person}{P~Resnik arXiv~preprint Cmp-lg/9511007}
  {and} \bibinfo{person}{undefined 1995}.} \bibinfo{year}{[n. d.]}\natexlab{}.
\newblock \showarticletitle{{Using information content to evaluate semantic
  similarity in a taxonomy}}.
\newblock \bibinfo{journal}{\emph{arxiv.org}} (\bibinfo{year}{[n. d.]}).
\newblock
\urldef\tempurl%
\url{https://arxiv.org/abs/cmp-lg/9511007}
\showURL{%
\tempurl}


\bibitem[\protect\citeauthoryear{Bade and N{\"{u}}rnberger}{Bade and
  N{\"{u}}rnberger}{2007}]%
        {Bade2007}
\bibfield{author}{\bibinfo{person}{Korinna Bade} {and} \bibinfo{person}{Andreas
  N{\"{u}}rnberger}.} \bibinfo{year}{2007}\natexlab{}.
\newblock \showarticletitle{{Personalized hierarchical clustering}}.
\newblock \bibinfo{journal}{\emph{Proceedings - 2006 IEEE/WIC/ACM International
  Conference on Web Intelligence (WI 2006 Main Conference Proceedings), WI'06}}
  \bibinfo{number}{June} (\bibinfo{year}{2007}), \bibinfo{pages}{181--187}.
\newblock
\showISBNx{0769527477}
\urldef\tempurl%
\url{https://doi.org/10.1109/WI.2006.131}
\showDOI{\tempurl}


\bibitem[\protect\citeauthoryear{Bair}{Bair}{2013}]%
        {Bair2013}
\bibfield{author}{\bibinfo{person}{Eric Bair}.}
  \bibinfo{year}{2013}\natexlab{}.
\newblock \showarticletitle{{Semi-supervised clustering methods}}.
\newblock  (\bibinfo{year}{2013}), \bibinfo{pages}{1--28}.
\newblock
\showISBNx{6176321972}
\showISSN{19395108}
\urldef\tempurl%
\url{https://doi.org/10.1002/wics.1270}
\showDOI{\tempurl}
\showeprint[arxiv]{1307.0252}


\bibitem[\protect\citeauthoryear{Basu, Davidson, and Wagstaff}{Basu
  et~al\mbox{.}}{2008}]%
        {Basu2008}
\bibfield{author}{\bibinfo{person}{Sugato Basu}, \bibinfo{person}{Ian
  Davidson}, {and} \bibinfo{person}{Kiri Wagstaff}.}
  \bibinfo{year}{2008}\natexlab{}.
\newblock \bibinfo{booktitle}{\emph{{Constrained Clustering: Advances in
  Algorithms, Theory, and Applications, 1 edition}}}.
  Vol.~\bibinfo{volume}{45}.
\newblock 961--970 pages.
\newblock
\showISBNx{9781584889960}
\showISSN{10016538}
\urldef\tempurl%
\url{https://doi.org/10.1007/BF02884971}
\showDOI{\tempurl}


\bibitem[\protect\citeauthoryear{Ben-David and Ackerman}{Ben-David and
  Ackerman}{2009}]%
        {Ben-David2009}
\bibfield{author}{\bibinfo{person}{Shai Ben-David} {and}
  \bibinfo{person}{Margareta Ackerman}.} \bibinfo{year}{2009}\natexlab{}.
\newblock \showarticletitle{{Measures of Clustering Quality: A Working Set of
  Axioms for Clustering}}.
\newblock \bibinfo{journal}{\emph{Advances in Neural Information Processing
  Systems 21}} (\bibinfo{year}{2009}), \bibinfo{pages}{121--128}.
\newblock
\showISBNx{9781605609492}
\showISSN{00375349}
\urldef\tempurl%
\url{http://books.nips.cc/nips21.html}
\showURL{%
\tempurl}


\bibitem[\protect\citeauthoryear{Blei, Ng, and Jordan}{Blei
  et~al\mbox{.}}{2003}]%
        {Blei2003}
\bibfield{author}{\bibinfo{person}{David~M Blei}, \bibinfo{person}{Andrew~Y
  Ng}, {and} \bibinfo{person}{Michael~I Jordan}.}
  \bibinfo{year}{2003}\natexlab{}.
\newblock \showarticletitle{{Latent Dirichlet Allocation}}.
\newblock \bibinfo{journal}{\emph{J. Mlr}}  \bibinfo{volume}{3}
  (\bibinfo{year}{2003}), \bibinfo{pages}{993--1022}.
\newblock


\bibitem[\protect\citeauthoryear{{Blei David, Carin Lawrence}}{{Blei David,
  Carin Lawrence}}{2010}]%
        {BleiDavidCarinLawrence2010}
\bibfield{author}{\bibinfo{person}{Dunson~David {Blei David, Carin Lawrence}}.}
  \bibinfo{year}{2010}\natexlab{}.
\newblock \showarticletitle{{Probabilistic Topic Models}}.
\newblock \bibinfo{journal}{\emph{IEEE Signal Processing Magazine}}
  \bibinfo{volume}{27}, \bibinfo{number}{6} (\bibinfo{year}{2010}),
  \bibinfo{pages}{55--65}.
\newblock
\showISBNx{0805854185}
\showISSN{1053-5888}
\urldef\tempurl%
\url{https://doi.org/10.1109/MSP.2010.938079}
\showDOI{\tempurl}
\showeprint[arxiv]{1003.4916}


\bibitem[\protect\citeauthoryear{Carlsson and Memoli}{Carlsson and
  Memoli}{2010}]%
        {Carlsson2010}
\bibfield{author}{\bibinfo{person}{G Carlsson} {and} \bibinfo{person}{F
  Memoli}.} \bibinfo{year}{2010}\natexlab{}.
\newblock \showarticletitle{{Characterization, Stability and Convergence of
  Hierarchical Clustering Methods}}.
\newblock \bibinfo{journal}{\emph{Journal of Machine Learning Research}}
  \bibinfo{volume}{11} (\bibinfo{year}{2010}), \bibinfo{pages}{1425--1470}.
\newblock
\showISBNx{1532-4435}


\bibitem[\protect\citeauthoryear{Carlsson and M{\'{e}}moli}{Carlsson and
  M{\'{e}}moli}{2013}]%
        {Carlsson2013}
\bibfield{author}{\bibinfo{person}{Gunnar Carlsson} {and}
  \bibinfo{person}{Facundo M{\'{e}}moli}.} \bibinfo{year}{2013}\natexlab{}.
\newblock \showarticletitle{{Classifying Clustering Schemes}}.
\newblock \bibinfo{journal}{\emph{Foundations of Computational Mathematics}}
  \bibinfo{volume}{13}, \bibinfo{number}{2} (\bibinfo{year}{2013}),
  \bibinfo{pages}{221--252}.
\newblock
\showISBNx{0011051000}
\showISSN{16153375}
\urldef\tempurl%
\url{https://doi.org/10.1007/s10208-012-9141-9}
\showDOI{\tempurl}
\showeprint[arxiv]{1011.5270}


\bibitem[\protect\citeauthoryear{Dasgupta}{Dasgupta}{2015}]%
        {Dasgupta2015}
\bibfield{author}{\bibinfo{person}{Sanjoy Dasgupta}.}
  \bibinfo{year}{2015}\natexlab{}.
\newblock \showarticletitle{{A cost function for similarity-based hierarchical
  clustering}}.
\newblock  \bibinfo{number}{Section 2} (\bibinfo{year}{2015}),
  \bibinfo{pages}{1--18}.
\newblock
\showISBNx{9781450341325}
\showISSN{07378017}
\urldef\tempurl%
\url{https://doi.org/10.1145/2897518.2897527}
\showDOI{\tempurl}
\showeprint[arxiv]{1510.05043}


\bibitem[\protect\citeauthoryear{Davidson and Ravi}{Davidson and Ravi}{2005}]%
        {Davidson2005}
\bibfield{author}{\bibinfo{person}{Ian Davidson} {and} \bibinfo{person}{S~S
  Ravi}.} \bibinfo{year}{2005}\natexlab{}.
\newblock \showarticletitle{{Agglomerative Hierarchical Clustering with
  Constraints: Theory and Empirical Resutls}}.
\newblock \bibinfo{journal}{\emph{9th European Conference on Principles and
  Practice of Knowledge Discovery in Databases, PKDD 2005}}
  (\bibinfo{year}{2005}), \bibinfo{pages}{59--70}.
\newblock
\showISBNx{3-540-29244-6}
\urldef\tempurl%
\url{http://citeseerx.ist.psu.edu/viewdoc/download?doi=10.1.1.62.2314{\&}rep=rep1{\&}type=pdf}
\showURL{%
\tempurl}


\bibitem[\protect\citeauthoryear{{Di Summa}, Pritchard, and Sanit{\`{a}}}{{Di
  Summa} et~al\mbox{.}}{2015}]%
        {DiSumma2015}
\bibfield{author}{\bibinfo{person}{Marco {Di Summa}}, \bibinfo{person}{David
  Pritchard}, {and} \bibinfo{person}{Laura Sanit{\`{a}}}.}
  \bibinfo{year}{2015}\natexlab{}.
\newblock \showarticletitle{{Finding the closest ultrametric}}.
\newblock \bibinfo{journal}{\emph{Discrete Applied Mathematics}}
  \bibinfo{volume}{180} (\bibinfo{year}{2015}), \bibinfo{pages}{70--80}.
\newblock
\showISSN{0166218X}
\urldef\tempurl%
\url{https://doi.org/10.1016/j.dam.2014.07.023}
\showDOI{\tempurl}


\bibitem[\protect\citeauthoryear{Eldridge, Belkin, and Wang}{Eldridge
  et~al\mbox{.}}{2015}]%
        {Eldridge2015}
\bibfield{author}{\bibinfo{person}{Justin Eldridge}, \bibinfo{person}{Mikhail
  Belkin}, {and} \bibinfo{person}{Yusu Wang}.} \bibinfo{year}{2015}\natexlab{}.
\newblock \showarticletitle{{Beyond Hartigan Consistency: Merge Distortion
  Metric for Hierarchical Clustering}}.
\newblock  \bibinfo{volume}{40}, \bibinfo{number}{1981} (\bibinfo{year}{2015}),
  \bibinfo{pages}{1--19}.
\newblock
\showISSN{15337928}
\showeprint[arxiv]{1506.06422}
\urldef\tempurl%
\url{http://arxiv.org/abs/1506.06422}
\showURL{%
\tempurl}


\bibitem[\protect\citeauthoryear{Hartigan}{Hartigan}{1985}]%
        {Hartigan1985}
\bibfield{author}{\bibinfo{person}{J.~A. Hartigan}.}
  \bibinfo{year}{1985}\natexlab{}.
\newblock \showarticletitle{{Statistical theory in clustering}}.
\newblock \bibinfo{journal}{\emph{Journal of Classification}}
  \bibinfo{volume}{2}, \bibinfo{number}{1} (\bibinfo{date}{dec}
  \bibinfo{year}{1985}), \bibinfo{pages}{63--76}.
\newblock
\showISSN{0176-4268}
\urldef\tempurl%
\url{https://doi.org/10.1007/BF01908064}
\showDOI{\tempurl}


\bibitem[\protect\citeauthoryear{Heller and Ghahramani}{Heller and
  Ghahramani}{2005}]%
        {Heller2005}
\bibfield{author}{\bibinfo{person}{Katherine~a. Heller} {and}
  \bibinfo{person}{Zoubin Ghahramani}.} \bibinfo{year}{2005}\natexlab{}.
\newblock \showarticletitle{{Bayesian hierarchical clustering}}.
\newblock \bibinfo{journal}{\emph{Proceedings of the 22nd international
  conference on Machine learning}} (\bibinfo{year}{2005}),
  \bibinfo{pages}{297--304}.
\newblock
\showISBNx{1595931805}
\showISSN{1595931805}
\urldef\tempurl%
\url{https://doi.org/10.1145/1102351.1102389}
\showDOI{\tempurl}


\bibitem[\protect\citeauthoryear{Huang and Ribeiro}{Huang and Ribeiro}{2016}]%
        {Huang2016}
\bibfield{author}{\bibinfo{person}{Weiyu Huang} {and}
  \bibinfo{person}{Alejandro Ribeiro}.} \bibinfo{year}{2016}\natexlab{}.
\newblock \showarticletitle{{Hierarchical Clustering Given Confidence Intervals
  of Metric Distances}}.
\newblock  (\bibinfo{year}{2016}), \bibinfo{pages}{1--13}.
\newblock
\showeprint[arxiv]{1610.04274}
\urldef\tempurl%
\url{http://arxiv.org/abs/1610.04274}
\showURL{%
\tempurl}


\bibitem[\protect\citeauthoryear{Jain}{Jain}{2010}]%
        {Jain2010}
\bibfield{author}{\bibinfo{person}{Anil~K Jain}.}
  \bibinfo{year}{2010}\natexlab{}.
\newblock \showarticletitle{{Data Clustering: 50 Years Beyond K-Means}}.
\newblock \bibinfo{journal}{\emph{19th International Conference in Pattern
  Recognition (ICPR)}} (\bibinfo{year}{2010}), \bibinfo{pages}{651--666}.
\newblock
\showISBNx{9781424417360}
\showISSN{01678655}
\urldef\tempurl%
\url{https://doi.org/10.1016/j.patrec.2009.09.011}
\showDOI{\tempurl}


\bibitem[\protect\citeauthoryear{Jain and Dubes}{Jain and Dubes}{1988}]%
        {Jain1988}
\bibfield{author}{\bibinfo{person}{Anil~K. Jain} {and}
  \bibinfo{person}{Richard~C. Dubes}.} \bibinfo{year}{1988}\natexlab{}.
\newblock \bibinfo{booktitle}{\emph{{Algorithms for clustering data}}}.
\newblock \bibinfo{publisher}{Prentice Hall}. 320 pages.
\newblock
\showISBNx{013022278X}
\urldef\tempurl%
\url{https://dl.acm.org/citation.cfm?id=46712}
\showURL{%
\tempurl}


\bibitem[\protect\citeauthoryear{Jardine and Sibson}{Jardine and
  Sibson}{1971}]%
        {Jardine1971}
\bibfield{author}{\bibinfo{person}{Nicholas. Jardine} {and}
  \bibinfo{person}{Robin Sibson}.} \bibinfo{year}{1971}\natexlab{}.
\newblock \bibinfo{booktitle}{\emph{{Mathematical taxonomy}}}.
\newblock \bibinfo{publisher}{Wiley}. 286 pages.
\newblock
\showISBNx{0471440507}
\urldef\tempurl%
\url{https://books.google.com/books/about/Mathematical{\_}Taxonomy.html?id=ka4KAQAAIAAJ}
\showURL{%
\tempurl}


\bibitem[\protect\citeauthoryear{Jose}{Jose}{2000}]%
        {Jose2000}
\bibfield{author}{\bibinfo{person}{San Jose}.} \bibinfo{year}{2000}\natexlab{}.
\newblock \showarticletitle{{Model-Based Hierarchical Clustering}}.
\newblock  (\bibinfo{year}{2000}), \bibinfo{pages}{599--608}.
\newblock


\bibitem[\protect\citeauthoryear{Kleinberg}{Kleinberg}{2002}]%
        {Kleinberg2002}
\bibfield{author}{\bibinfo{person}{Jon Kleinberg}.}
  \bibinfo{year}{2002}\natexlab{}.
\newblock \showarticletitle{{An impossibility theorem for clustering}}.
\newblock \bibinfo{journal}{\emph{Advances in Neural Information Processing
  Systems}} (\bibinfo{year}{2002}), \bibinfo{pages}{446--453}.
\newblock
\showISBNx{0262025507}
\showISSN{0039-3215}
\urldef\tempurl%
\url{https://doi.org/10.1103/PhysRevE.90.062813}
\showDOI{\tempurl}
\showeprint[arxiv]{arXiv:physics/0607100v2}


\bibitem[\protect\citeauthoryear{Liu, Jin, and Jain}{Liu et~al\mbox{.}}{2007}]%
        {Liu2007}
\bibfield{author}{\bibinfo{person}{Yi Liu}, \bibinfo{person}{Rong Jin}, {and}
  \bibinfo{person}{Anil~K Jain}.} \bibinfo{year}{2007}\natexlab{}.
\newblock \showarticletitle{{BoostCluster: boosting clustering by pairwise
  constraints}}.
\newblock \bibinfo{journal}{\emph{Proceedings of the 13th {\{}ACM{\}}
  {\{}SIGKDD{\}} international conference on Knowledge discovery and data
  mining}} (\bibinfo{year}{2007}), \bibinfo{pages}{450--459}.
\newblock
\showISBNx{978-1-59593-609-7}
\urldef\tempurl%
\url{https://doi.org/10.1145/1281192.1281242}
\showDOI{\tempurl}


\bibitem[\protect\citeauthoryear{Meilǎ}{Meilǎ}{2005}]%
        {Meila2005}
\bibfield{author}{\bibinfo{person}{Marina Meilǎ}.}
  \bibinfo{year}{2005}\natexlab{}.
\newblock \showarticletitle{{Comparing clusterings}}.
\newblock \bibinfo{journal}{\emph{Proceedings of the 22nd international
  conference on Machine learning - ICML '05}} (\bibinfo{year}{2005}),
  \bibinfo{pages}{577--584}.
\newblock
\showISBNx{1595931805}
\showISSN{0047259X}
\urldef\tempurl%
\url{https://doi.org/10.1145/1102351.1102424}
\showDOI{\tempurl}


\bibitem[\protect\citeauthoryear{Moseley and Wang}{Moseley and Wang}{2017}]%
        {Moseley2017}
\bibfield{author}{\bibinfo{person}{Benjamin Moseley} {and}
  \bibinfo{person}{Joshua~R Wang}.} \bibinfo{year}{2017}\natexlab{}.
\newblock \showarticletitle{{Approximation Bounds for Hierarchical Clustering:
  Average Linkage, Bisecting K-means, and Local Search}}.
\newblock \bibinfo{journal}{\emph{Nips}} \bibinfo{number}{Nips}
  (\bibinfo{year}{2017}).
\newblock


\bibitem[\protect\citeauthoryear{Murtagh and Contreras}{Murtagh and
  Contreras}{2012}]%
        {Murtagh2012}
\bibfield{author}{\bibinfo{person}{Fionn Murtagh} {and} \bibinfo{person}{Pedro
  Contreras}.} \bibinfo{year}{2012}\natexlab{}.
\newblock \showarticletitle{{Algorithms for hierarchical clustering: An
  overview}}.
\newblock \bibinfo{journal}{\emph{Wiley Interdisciplinary Reviews: Data Mining
  and Knowledge Discovery}} \bibinfo{volume}{2}, \bibinfo{number}{1}
  (\bibinfo{year}{2012}), \bibinfo{pages}{86--97}.
\newblock
\showISBNx{19424787}
\showISSN{19424787}
\urldef\tempurl%
\url{https://doi.org/10.1002/widm.53}
\showDOI{\tempurl}
\showeprint[arxiv]{1105.0121}


\bibitem[\protect\citeauthoryear{Roy and Pokutta}{Roy and Pokutta}{2016}]%
        {Roy2016}
\bibfield{author}{\bibinfo{person}{Aurko Roy} {and} \bibinfo{person}{Sebastian
  Pokutta}.} \bibinfo{year}{2016}\natexlab{}.
\newblock \showarticletitle{{Hierarchical Clustering via Spreading Metrics}}.
\newblock  (\bibinfo{year}{2016}), \bibinfo{pages}{1--35}.
\newblock
\showISSN{10495258}
\showeprint[arxiv]{1610.09269}
\urldef\tempurl%
\url{http://arxiv.org/abs/1610.09269}
\showURL{%
\tempurl}


\bibitem[\protect\citeauthoryear{Wagstaff, Cardie, Rogers, and
  Schroedl}{Wagstaff et~al\mbox{.}}{2001}]%
        {Wagstaff2001}
\bibfield{author}{\bibinfo{person}{Kiri Wagstaff}, \bibinfo{person}{Claire
  Cardie}, \bibinfo{person}{Seth Rogers}, {and} \bibinfo{person}{Stefan
  Schroedl}.} \bibinfo{year}{2001}\natexlab{}.
\newblock \showarticletitle{{Constrained K-means Clustering with Background
  Knowledge}}.
\newblock \bibinfo{journal}{\emph{International Conference on Machine
  Learning}} (\bibinfo{year}{2001}), \bibinfo{pages}{577--584}.
\newblock
\showISBNx{1558607781}
\showISSN{0162-8828}
\urldef\tempurl%
\url{https://doi.org/10.1109/TPAMI.2002.1017616}
\showDOI{\tempurl}


\bibitem[\protect\citeauthoryear{Xing, Ng, Jordan, and Russell}{Xing
  et~al\mbox{.}}{1986}]%
        {Xing1986}
\bibfield{author}{\bibinfo{person}{Eric~P Xing}, \bibinfo{person}{Andrew~Y Ng},
  \bibinfo{person}{Michael~I Jordan}, {and} \bibinfo{person}{Stuart Russell}.}
  \bibinfo{year}{1986}\natexlab{}.
\newblock \showarticletitle{{Distance metric learning, with application to
  clustering with side-information}}.
\newblock \bibinfo{journal}{\emph{Transportation Research Record}}
  (\bibinfo{year}{1986}).
\newblock


\bibitem[\protect\citeauthoryear{Zadeh and Ben-David}{Zadeh and
  Ben-David}{2009}]%
        {Zadeh2009}
\bibfield{author}{\bibinfo{person}{RB Zadeh} {and} \bibinfo{person}{S
  Ben-David}.} \bibinfo{year}{2009}\natexlab{}.
\newblock \showarticletitle{{A uniqueness theorem for clustering Clustering}}.
\newblock \bibinfo{journal}{\emph{Proceedings of the twenty-fifth conference
  on}} (\bibinfo{year}{2009}).
\newblock


\bibitem[\protect\citeauthoryear{Zhao and Qi}{Zhao and Qi}{2010}]%
        {Zhao2010}
\bibfield{author}{\bibinfo{person}{Haifeng Zhao} {and} \bibinfo{person}{Zi~Jie
  Qi}.} \bibinfo{year}{2010}\natexlab{}.
\newblock \showarticletitle{{Hierarchical agglomerative clustering with
  ordering constraints}}.
\newblock \bibinfo{journal}{\emph{3rd International Conference on Knowledge
  Discovery and Data Mining, WKDD 2010}} (\bibinfo{year}{2010}),
  \bibinfo{pages}{195--199}.
\newblock
\showISBNx{9780769539232}
\urldef\tempurl%
\url{https://doi.org/10.1109/WKDD.2010.123}
\showDOI{\tempurl}


\bibitem[\protect\citeauthoryear{Zhao, of~the eleventh international
  conference~on {\ldots}, and undefined 2002}{Zhao et~al\mbox{.}}{[n. d.]}]%
        {Zhao2002}
\bibfield{author}{\bibinfo{person}{Y Zhao}, \bibinfo{person}{G~Karypis of~the
  eleventh international conference~on {\ldots}}, {and}
  \bibinfo{person}{undefined 2002}.} \bibinfo{year}{[n. d.]}\natexlab{}.
\newblock \showarticletitle{{Evaluation of hierarchical clustering algorithms
  for document datasets}}.
\newblock \bibinfo{journal}{\emph{dl.acm.org}} (\bibinfo{year}{[n. d.]}).
\newblock
\urldef\tempurl%
\url{https://dl.acm.org/citation.cfm?id=584877}
\showURL{%
\tempurl}


\bibitem[\protect\citeauthoryear{Zheng and Li}{Zheng and Li}{2011}]%
        {Zheng2011}
\bibfield{author}{\bibinfo{person}{Li Zheng} {and} \bibinfo{person}{Tao Li}.}
  \bibinfo{year}{2011}\natexlab{}.
\newblock \showarticletitle{{Semi-supervised hierarchical clustering}}.
\newblock \bibinfo{journal}{\emph{Proceedings - IEEE International Conference
  on Data Mining, ICDM}} (\bibinfo{year}{2011}), \bibinfo{pages}{982--991}.
\newblock
\showISBNx{9780769544083}
\showISSN{15504786}
\urldef\tempurl%
\url{https://doi.org/10.1109/ICDM.2011.130}
\showDOI{\tempurl}


\end{thebibliography}

\end{document}